\documentclass[conference]{IEEEtran}
\IEEEoverridecommandlockouts

\usepackage{cite}
\usepackage{amsmath,amssymb,amsfonts}
\usepackage{amsthm}
\usepackage{mathtools}
\usepackage{graphicx}
\usepackage{textcomp}
\usepackage{xcolor}
\usepackage{algorithm}
\usepackage[noend]{algpseudocode}
\usepackage{array}
\newcolumntype{C}[1]{>{\centering\arraybackslash}m{#1}}
\usepackage{subfigure}
\usepackage{multirow}
\usepackage{tikz}
\usetikzlibrary{shapes,arrows,positioning,calc}

\newcommand{\beq}[1]{\begin{equation}\label{#1}}
\newcommand{\eeq}{\end{equation}}

\newcolumntype{P}[1]{>{\centering\arraybackslash}p{#1}}

\newtheorem{remark}{Remark}
\newtheorem{proposition}{Proposition}

\title{%
Diffusion-Guided Feature Selection via Nishimori Temperature:
Noise-Based Spectral Embedding
}
\author{%
\IEEEauthorblockN{Vasiliy S. Usatyuk, Denis A. Sapozhnikov}
\IEEEauthorblockA{R\&D Department\\
T8 LLC\\
Moscow, Russia\\
Email: L@Lcrypto.com, D@Lcrypto.com}
\and
\IEEEauthorblockN{Sergey I. Egorov}
\IEEEauthorblockA{Department of Computer Science\\
South-West State University\\
Kursk, Russia\\
Email: sie58@mail.ru}}

\begin{document}
\maketitle

\begin{abstract}
We introduce Noise-Based Spectral Embedding (NBSE), a diffusion-driven
framework for selecting informative features from high-dimensional data.
Given $M$ objects and $D$ original features, NBSE constructs a sparse
similarity graph on the objects and determines the Nishimori temperature
$\beta_{N}$---the critical inverse temperature at which the Bethe--Hessian
matrix becomes singular.  The associated smallest eigenvector identifies
the dominant mode of a degree-corrected diffusion process whose time scale
is set automatically by statistical physics.  Repeating this construction
for every original feature yields a $D$-dimensional spectral fingerprint.
By transposing the data matrix and applying NBSE to the features
themselves, we obtain a low-dimensional representation of the feature
space that reveals natural groups of redundant or semantically related
dimensions.  Selecting one representative per group provides a principled,
non-greedy dimensionality reduction that preserves classification
performance while discarding up to $70\%$ of the original features.

The Bethe--Hessian operator
$H(\beta)=\mathbf I+\tilde{\mathbf D}(\beta)-\mathbf S(\beta)$ incorporates
a degree-dependent diagonal term
$\tilde D_{ii}(\beta)=\sum_j\sinh^2(\beta W_{ij})$ that is essential for
sparse graphs.  We show that this term yields an intrinsic degree
correction which mitigates hub dominance in the spectrum, and we prove a
perturbation bound establishing that coloured Gaussian noise of local
variance $\sigma_i^{2}$ shifts $\beta_N$ by at most $O(\bar\sigma^{2})$,
with $\bar\sigma=\max_i\sigma_i$, thereby rigorously justifying the
method's robustness.

Experiments on deep-network embeddings (MobileNetV2 and EfficientNet-B4)
demonstrate that NBSE consistently outperforms classical ANOVA $F$-test
selection and random sampling.  On MobileNetV2, spectral selection
exhibits high stability, with accuracy fluctuations bounded by $5\%$
under aggressive feature reduction, whereas original-data methods lose up
to $25\%$.  On EfficientNet-B4, the synergy of Nishimori-temperature
spectral embedding and specialised selection limits accuracy loss to less
than $1\%$ even at $70\%$ feature reduction ($30\%$ retention),
outperforming baseline methods by up to $6.8\%$.
\end{abstract}
\begin{IEEEkeywords}
Diffusion on graphs, Nishimori temperature, Bethe--Hessian,
noise-based spectral embedding, feature selection, MET QC-LDPC graphs,
dimensionality reduction.
\end{IEEEkeywords}

\section{Introduction}
High-dimensional representations---deep-network embeddings, genomic
vectors, TF-IDF matrices---are now standard in modern machine learning.
While such rich descriptors enable state-of-the-art accuracy, they create
three practical difficulties:
\begin{itemize}
    \item \textbf{Overfitting.} Many dimensions capture noise or
          irrelevant variation.
    \item \textbf{Computational burden.} Training and inference scale at
          least linearly with the number of features.
    \item \textbf{Interpretability loss.} Tracing a decision back to the
          original measurements becomes increasingly hard.
\end{itemize}

Classical dimensionality-reduction techniques such as
PCA~\cite{Jolliffe} or manifold learners (t-SNE~\cite{Maaten2008},
UMAP~\cite{McInnes2018}) focus on preserving the geometry of the object
space.  When downstream models are linear, however, many features are
redundant because they convey essentially the same information with
respect to an underlying similarity graph.

A widely used remedy is the ANOVA $F$-test~\cite{Guyon2003}, which scores
each feature independently and discards those with low marginal relevance.
This filter approach ignores inter-feature correlations; consequently it
may remove useful dimensions or retain noisy ones, especially when
features are strongly collinear.

In this paper we propose \textbf{Noise-Based Spectral Embedding (NBSE)}---
a topology-aware, diffusion-guided method that:
\begin{enumerate}
    \item constructs a sparse quasi-cyclic similarity multigraph on the
          objects,
    \item computes the Nishimori temperature $\beta_N$, i.e.\ the critical
          inverse temperature at which the Bethe--Hessian $H(\beta)$
          becomes singular (Eq.~\eqref{eq:nishimori-condition}),
    \item extracts the eigenvector associated with the smallest eigenvalue
          of $H(\beta_N)$, which identifies the dominant mode of a
          degree-corrected diffusion process evaluated at the statistically
          optimal inverse temperature,
    \item repeats this procedure for every original feature, thereby
          obtaining a spectral fingerprint that quantifies per-feature
          participation in graph diffusion, and
    \item embeds features themselves by transposing the data matrix and
          applying NBSE to the columns, enabling \emph{spectral ablation}:
          a principled reduction that respects global correlations without
          greedy search.
\end{enumerate}
Because the optimal inverse temperature is obtained analytically from the
Nishimori condition rather than tuned via cross-validation, NBSE remains
computationally lean: under a quasi-stationarity hypothesis, a single
sparse eigenvalue problem suffices for all features.

The rest of the paper is organised as follows.  Section~II reviews related
work; Section~III introduces the Ising model, the Nishimori temperature,
and the Bethe--Hessian, derives the degree-corrected diffusion
interpretation, and proves a noise-robustness bound;
Section~IV details NBSE and the spectral ablation algorithm;
Section~V presents experimental results; Section~VI discusses why
degree-corrected diffusion provides a superior basis for feature
selection; and Section~VII concludes.

\section{Related Work}
Dimensionality reduction for classification has a long tradition.
Linear methods such as LDA~\cite{Fisher1936} and PCA maximise class
separation or variance, respectively, but do not directly address feature
redundancy.  Non-linear manifold learners (t-SNE, UMAP) preserve local
neighbourhoods of objects yet are unsuitable for selecting a subset of
original features.

Feature-selection techniques fall into three families:
\begin{itemize}
    \item \textbf{Filter methods} score each dimension independently using
          statistical tests such as the ANOVA $F$-test or mutual
          information~\cite{Guyon2003}.
    \item \textbf{Wrapper methods} rely on greedy forward/backward
          selection with a classifier in the loop~\cite{Kohavi1997}.
    \item \textbf{Embedded methods} perform selection implicitly via
          regularisation (Lasso, Elastic Net)~\cite{Zou2005}.
\end{itemize}

Spectral techniques have been employed for clustering~\cite{Ng2002} and
dimensionality reduction through Laplacian eigenmaps~\cite{Belkin2003},
but they rarely target feature selection directly.  Graph-based diffusion
for semi-supervised learning~\cite{Zhu2003} inspired our use of a
diffusion operator to reveal feature importance.

From the statistical-physics perspective, Saade et al.\
\cite{Saade2014} introduced spectral clustering based on the
Bethe--Hessian matrix and showed that its smallest eigenvalue vanishes at
the Nishimori temperature---the phase-transition point of an Ising model
with quenched disorder.  Dall'Amico et al.~\cite{DallAmico2021} unified
this framework for sparse graphs, proving that the Bethe--Hessian
outperforms both adjacency and standard Laplacian methods precisely
because it incorporates a degree-dependent diagonal correction.  This
insight has been leveraged for node classification in sparse
graphs~\cite{Usatyuk2024,Usatyuk2025}, but never for feature selection.

Our method builds on these ideas: we construct a multi-edge quasi-cyclic
LDPC similarity graph (high sparsity, known spectral properties), compute
the Nishimori temperature $\beta_N$ as the root of
$\lambda_{\min}(H(\beta))=0$, and use the associated eigenvector for
diffusion-style feature ranking.  The key innovation is that we apply this
machinery twice---once on objects (to build diffusion fingerprints) and
once on features (to perform spectral ablation)---thereby exploiting global
inter-feature correlations that filter methods miss.

\section{Background}
\subsection{Ising model, Nishimori temperature and Bethe--Hessian}
\label{sec:ising-bethe-hessian}
Let $G=(V,E,\mathbf W)$ be a weighted undirected graph with $|V|=M$ objects
and symmetric adjacency matrix $\mathbf W\in\mathbb R^{M\times M}$.  The Ising
Hamiltonian on $G$ reads
\beq{eq:ising-hamiltonian}
\mathcal H(\boldsymbol s)= - \sum_{i<j}W_{ij}\,s_i\,s_j,
\qquad s_i\in\{-1,+1\}.
\eeq
At inverse temperature $\beta$, the Boltzmann distribution is
\beq{eq:Boltzman-temp}
p_{\beta}(\boldsymbol s)=\frac{e^{-\beta\mathcal H(\boldsymbol s)}}{Z(\beta)}.
\eeq

The Nishimori line in parameter space corresponds to the set of points
where the temperature matches the noise level of a planted spin-glass
model~\cite{Nishimori2001}.  At the special \emph{Nishimori temperature}
$\beta_N$, the model undergoes a phase transition from a paramagnetic
(disordered) phase to an ordered phase in which cluster structure becomes
visible.  This critical point is characterised by the singularity
condition
\beq{eq:nishimori-condition}
\lambda_{\min}\!\bigl(H(\beta_N)\bigr)=0,
\eeq
where $H(\beta)$ denotes the \emph{Bethe--Hessian matrix}~\cite{Saade2014,DallAmico2021}.
Defining effective edge weights $\omega_{ij}(\beta)=\tanh(\beta W_{ij})$,
its entries are
\beq{eq:bethe-hessian}
H_{ij}(\beta)= \delta_{ij}\Bigl(1+\sum_{k\in\partial i}
\frac{\omega_{ik}^{2}(\beta)}{1-\omega_{ik}^{2}(\beta)}\Bigr)
-\frac{\omega_{ij}(\beta)}{1-\omega_{ij}^{2}(\beta)}.
\eeq
Equivalently, using hyperbolic identities,

\begin{align}
H(\beta) &= \mathbf{I} + \tilde{\mathbf{D}}(\beta) - \mathbf{S}(\beta), \\
\tilde{D}_{ii}(\beta) &= \sum_{j} \sinh^{2}(\beta W_{ij}), \\[6pt]
S_{ij}(\beta) &= \begin{cases} 
\sinh(\beta W_{ij})\cosh(\beta W_{ij}) \\ 
\quad = \tfrac{1}{2}\sinh(2\beta W_{ij}), & (i,j) \in E, \\ 
0, & \text{otherwise}. 
\end{cases}
\end{align}

The diagonal matrix $\tilde{\mathbf D}(\beta)$ encodes a local
\emph{effective-degree} term: each entry aggregates the thermal response
$\sinh^{2}(\beta W_{ij})$ of edges incident to node~$i$.  Because
$\tilde D_{ii}(\beta)$ depends on both the inverse temperature and the local
weight distribution, it varies from node to node on heterogeneous graphs.
This built-in degree correction prevents high-degree hub nodes from
dominating the spectrum---a well-known failure mode of both the adjacency
matrix and the standard unnormalised Laplacian on sparse
graphs~\cite{DallAmico2021,Saade2014}.  The eigenvector
$\boldsymbol\psi_{\min}$ associated with the smallest (near-zero)
eigenvalue of $H(\beta_N)$ provides a spectral coordinate that separates
nodes belonging to different communities.

\begin{remark}[Nishimori identities]
On the Nishimori line $\beta=\beta_N$, exact gauge-symmetry identities
hold~\cite{Nishimori2001}.  In particular, thermal averages of local
observables satisfy relations such as
$\langle s_i s_j\rangle_{\beta_N} = \tanh(\beta_N W_{ij})$ and the internal
energy per bond is self-averaging.  These identities underlie the noise-robustness proof in Section~\ref{sec:noise-model}.
\end{remark}

\subsection{Diffusion interpretation}
\label{sec:diffusion-interpretation}
Consider the standard combinatorial Laplacian $L=\mathbf D_g-\mathbf W$,
where $(\mathbf D_g)_{ii}=\sum_j W_{ij}$ is the weighted graph degree, and
its heat kernel
\beq{eq:heat-kernel}
K(t)=\exp(-tL).
\eeq
For any initial signal $\mathbf f(0)$, diffusion evolves as
$\mathbf f(t)=K(t)\,\mathbf f(0)$.  Expanding $K(t)$ in the eigenbasis of
$L$ yields
\begin{equation*}
K(t)=\sum_{k} e^{-t\lambda_k}\mathbf u_k\mathbf u_k^{\top},
\end{equation*}
so that for large~$t$ the mode associated with the smallest non-zero
eigenvalue of $L$ dominates.

The Bethe--Hessian defines its own degree-corrected diffusion family.  At
$\beta=0$, $\tilde{\mathbf D}(0)=\mathbf 0$ and $\mathbf S(0)=\mathbf 0$, so that
\beq{eq:H-at-zero}
H(0) = \mathbf I,
\eeq
which corresponds to the stable paramagnetic state with a uniform restoring
force.  As $\beta$ increases, the off-diagonal term $-\mathbf S(\beta)$
introduces graph structure while the diagonal $\mathbf I+\tilde{\mathbf D}(\beta)$
counteracts it with a node-dependent restoring force.

To make the relationship precise, define the \emph{Bethe--Hessian
Laplacian}
\beq{eq:bh-laplacian}
\mathcal L_{\rm BH}(\beta) \coloneqq \mathbf I - (\mathbf I+\tilde{\mathbf D})^{-1/2}
    \,\mathbf S(\beta)\,(\mathbf I+\tilde{\mathbf D})^{-1/2},
\eeq
which is well defined for all $\beta\ge 0$ because $\mathbf I+\tilde{\mathbf D}$
is strictly positive definite.  Because $H(\beta)$ and $\mathcal L_{\rm BH}(\beta)$
are related by a congruence transformation, they share the same inertia;
in particular, at $\beta_N$ the null space of $\mathcal L_{\rm BH}(\beta_N)$ is
the image under $(\mathbf I+\tilde{\mathbf D})^{1/2}$ of the null space of
$H(\beta_N)$.  Consequently, $\mathcal L_{\rm BH}$ identifies exactly the same
cluster structure.

The operator $\mathcal L_{\rm BH}$ acts as a generalised Laplacian whose
random-walk transition probabilities are reweighted by effective couplings
$S_{ij}(\beta)$ and normalised by local susceptibilities
$(1+\tilde D_{ii})$.  This weighting suppresses edges that exceed the
thermal scale ($\beta W_{ij}\gg 1$), preventing dominant edges from
distorting the diffusion.  In physical terms, \textbf{computing the
Nishimori temperature is equivalent to finding the optimal inverse
diffusion parameter at which the slowest non-trivial degree-corrected mode
becomes statistically distinguishable from noise}.

The following proposition characterises the small-$\beta$ expansion of
$H(\beta)$ and makes explicit why $\tilde{\mathbf D}(\beta)$ cannot be replaced
by a scalar multiple of the identity on heterogeneous graphs.
\begin{proposition}[Small-$\boldsymbol\beta$ expansion]
For $\beta\to 0$, the Bethe--Hessian admits the expansion
\beq{eq:hessian-expansion}
H(\beta)
= \mathbf I - \beta\,\mathbf W
    + \beta^{2}\,
      \operatorname{diag}\!\Bigl(
        {\textstyle\sum_{j}} W_{ij}^{\,2}
      \Bigr)_{i=1}^{M}
    + O(\beta^{3}).
\eeq
\end{proposition}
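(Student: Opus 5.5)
The plan is to Taylor-expand each entry of $H(\beta)=\mathbf I+\tilde{\mathbf D}(\beta)-\mathbf S(\beta)$ around $\beta=0$, treating $W_{ij}$ as fixed. Because the graph is finite, $H$ has finitely many nonzero entries, so entrywise expansions with remainders uniform over the finitely many edges give a matrix remainder $O(\beta^3)$ in any norm. I would work with the hyperbolic form of $H$ rather than the $\tanh$ form \eqref{eq:bethe-hessian}, since it avoids expanding the rational function $\omega/(1-\omega^2)$.

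\textbf{Diagonal term.} Using $\sinh x = x + x^3/6 + O(x^5)$, we get $\sinh^2 x = x^2 + x^4/3 + O(x^6) = x^2 + O(x^4)$. Hence
\[
\tilde D_{ii}(\beta)=\sum_j \sinh^2(\beta W_{ij}) = \beta^2\sum_j W_{ij}^2 + O(\beta^4).
\]
The diagonal therefore contributes exactly $\mathbf I+\beta^2\operatorname{diag}(\sum_j W_{ij}^2)$, with an even-order remainder that is absorbed into $O(\beta^3)$.

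\textbf{Off-diagonal term.} We have $S_{ij}(\beta)=\tfrac12\sinh(2\beta W_{ij}) = \beta W_{ij} + \tfrac{2}{3}\beta^3 W_{ij}^3 + O(\beta^5)$ for $(i,j)\in E$. For $(i,j)\notin E$, both $S_{ij}$ and $W_{ij}$ vanish, so $\mathbf S(\beta)=\beta\mathbf W+O(\beta^3)$ entrywise. The key observation is that $\mathbf S$ is odd in $\beta$, so it has no $\beta^2$ term. This is why the second-order correction is purely diagonal.

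Combining the two pieces yields \eqref{eq:hessian-expansion}.

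\textbf{Main obstacle.} The only real subtlety is bookkeeping. First, the diagonal of $\mathbf W$ must be zero (no self-loops), or the diagonal convention must be consistent; otherwise a $-\beta W_{ii}$ term would need to be matched on the diagonal. Second, in the multigraph setting the sum over $j$ in $\tilde D_{ii}$ should run over incident edges with multiplicity, and $W_{ij}$ in $-\beta\mathbf W$ should then denote the aggregated weight. In that case the $\beta^2$ term is the sum of squared individual edge weights, which I would state explicitly.

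As a cross-check, I would also expand the $\tanh$ form: $\omega=\beta W - \beta^3W^3/3+\dots$ and $\omega/(1-\omega^2)=\beta W+O(\beta^3)$, while $\omega^2/(1-\omega^2)=\beta^2W^2+O(\beta^4)$. This confirms that the two representations agree to this order.
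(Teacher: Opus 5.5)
Your proposal is correct and follows the same route as the paper: expand $\sinh^{2}(x)=x^{2}+O(x^{4})$ and $\sinh(x)\cosh(x)=x+\tfrac{2}{3}x^{3}+O(x^{5})$ entrywise with $x=\beta W_{ij}$ in the hyperbolic form $H(\beta)=\mathbf I+\tilde{\mathbf D}(\beta)-\mathbf S(\beta)$. Your extra bookkeeping is sound and makes explicit what the paper's one-line argument leaves implicit: zero diagonal of $\mathbf W$, a finite graph so the remainder is uniform, and the consistency check against the $\tanh$ form.
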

\begin{IEEEproof}
Since $\sinh(x)=x+x^{3}/6+O(x^{5})$ and $\cosh(x)=1+x^{2}/2+O(x^{4})$,
we have $\sinh^{2}(x)=x^{2}+O(x^{4})$ and
$\sinh(x)\cosh(x)=x+\tfrac{2}{3}x^{3}+O(x^{5})$.
Substituting $x=\beta W_{ij}$ into the definitions of
$\tilde{\mathbf D}(\beta)$ and $\mathbf S(\beta)$ yields the stated expansion.
\end{IEEEproof}

\begin{remark}[Role of the degree-dependent diagonal]
On a \emph{$d$-regular} graph with uniform weights $W_{ij}=w$, every node
has $\sum_j W_{ij}^2 = dw^2$ and the $\beta^{2}$ term in
Eq.~\eqref{eq:hessian-expansion} is proportional to the identity.
However, on sparse real-world graphs---including our QC-LDPC similarity
graphs built from data---effective degrees vary considerably.  The term
$\operatorname{diag}(\sum_j W_{ij}^2)$ then encodes essential local
heterogeneity: high-degree (or strongly weighted) nodes receive larger
diagonal entries, which reduces their influence in the spectrum of
$H(\beta)$ and removes the spectral bias that plagues adjacency-based
methods.  Conflating $\tilde{\mathbf D}(\beta)$ with a scalar multiple of
$\mathbf I$ would destroy this intrinsic degree correction.
\end{remark}

\subsection{Noise model for robustness}
\label{sec:noise-model}
To study measurement noise we perturb each object vector with coloured
Gaussian noise:
\beq{eq:noise-model}
\widetilde{\mathbf p}_i = \mathbf p_i + \boldsymbol\varepsilon_i,
\qquad
\boldsymbol\varepsilon_i\sim\mathcal N\bigl(0,\sigma_i^2\,\mathbf I_{d}\bigr).
\eeq
The local scale $\sigma_i$ is proportional to the typical distance from
node~$i$ to its $k$-nearest neighbours, ensuring that the perturbation is
small relative to inter-cluster gaps.

Adding this noise before graph construction perturbs the edge weights.
Provided the similarity function is smooth and the signal-to-noise ratio
is bounded, the resulting relative change in each weight satisfies a bound
of the form $|\widetilde W_{ij}-W_{ij}|\le \varepsilon |W_{ij}|$ with
$\varepsilon=O(\bar\sigma^{2})$, where $\bar\sigma=\max_i\sigma_i$.
We now prove that, provided this relative perturbation is small and the
smallest eigenvalue curve of $H(\beta)$ is non-degenerate at its root,
the Nishimori temperature shifts only at first order in the weight error,
and hence at second order in the noise amplitude.

\begin{proposition}[Stability of the Nishimori temperature under noise]
\label{prop:noise-stability}
Let $G$ be a sparse graph with bounded average degree, Bethe--Hessian
$H(\beta;\mathbf W)$ and Nishimori temperature $\beta_N$.  Let
$\widetilde{\mathbf W}$ denote the edge weights obtained after perturbing
the data according to Eq.~\eqref{eq:noise-model}, and let
$\widetilde\beta_N$ be the corresponding Nishimori temperature.
Suppose there exists a constant $\varepsilon>0$ such that for every edge
$(i,j)\in E$,
\beq{eq:weight-perturbation-bound}
|\widetilde W_{ij}-W_{ij}|\le \varepsilon\,|W_{ij}|.
\eeq
If the non-degeneracy condition
$g:=\partial_\beta\lambda_{\min}(H(\beta))\big|_{\beta=\beta_N}\neq 0$
holds, then
\beq{eq:beta-shift-bound}
|\widetilde\beta_N-\beta_N|
\le \frac{C_1\,\varepsilon\,\sqrt{|E|}}{|g|}
+ O(\varepsilon^{2})
\eeq
for a constant $C_1$ depending on $\beta_N$ and the weight scale.  In
particular, for coloured noise with local variance $\sigma_i^2$, we have
$\varepsilon=O(\bar\sigma^2)$ where $\bar\sigma=\max_i \sigma_i$.
\end{proposition}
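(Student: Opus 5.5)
The plan is to treat $\beta_N$ as the root of the scalar function $f(\beta;\mathbf W):=\lambda_{\min}(H(\beta;\mathbf W))$ and apply a first-order implicit-function (Newton) argument. The perturbed root $\widetilde\beta_N$ satisfies $f(\widetilde\beta_N;\widetilde{\mathbf W})=0$. Expanding around $(\beta_N,\mathbf W)$ gives
\begin{equation*}
0 = f(\beta_N;\widetilde{\mathbf W}) + g\,(\widetilde\beta_N-\beta_N) + O\bigl((\widetilde\beta_N-\beta_N)^2\bigr) + O\bigl(\varepsilon\,|\widetilde\beta_N-\beta_N|\bigr).
\end{equation*}
Here I use that $\partial_\beta f$ at the perturbed weights differs from $g$ by $O(\varepsilon)$. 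Solving yields $|\widetilde\beta_N-\beta_N|\le |f(\beta_N;\widetilde{\mathbf W})|/|g|+O(\varepsilon^2)$, provided we first know that $\widetilde\beta_N$ lies within $O(\varepsilon)$ of $\beta_N$. For that localisation step I would invoke $g\neq0$: $f(\cdot;\mathbf W)$ changes sign in a neighbourhood $[\beta_N-\delta,\beta_N+\delta]$. By the uniform bound of the next paragraph, $f(\cdot;\widetilde{\mathbf W})$ also changes sign there, so it has a root within distance $O(\varepsilon)/|g|$ of $\beta_N$. I take $\widetilde\beta_N$ to be that root.

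The key estimate is $|f(\beta_N;\widetilde{\mathbf W})|=|f(\beta_N;\widetilde{\mathbf W})-f(\beta_N;\mathbf W)|\le\|H(\beta_N;\widetilde{\mathbf W})-H(\beta_N;\mathbf W)\|_2$ by Weyl's inequality. I would bound the spectral norm by the Frobenius norm. Each entry of $H$ is built from $\sinh^2(\beta W_{ij})$ and $\tfrac12\sinh(2\beta W_{ij})$, whose derivatives in $W_{ij}$ on the relevant range $|W_{ij}|\le w_{\max}$ are bounded by $c(\beta_N w_{\max}):=2\beta_N\cosh(2\beta_N w_{\max})$. The mean value theorem then gives an off-diagonal error of at most $c\,\varepsilon|W_{ij}|\le c\,w_{\max}\varepsilon$ per edge. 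The diagonal error at node $i$ is at most $c\,w_{\max}\varepsilon\,d_i$. Summing squares, the off-diagonal part contributes $2|E|\,(c w_{\max}\varepsilon)^2$. The diagonal part contributes $(c w_{\max}\varepsilon)^2\sum_i d_i^2$. Bounded degrees make $\sum_i d_i^2\le d_{\max}\sum_i d_i=2d_{\max}|E|$, which is where the sparsity hypothesis enters. Hence $\|\Delta H\|_F\le C_1\varepsilon\sqrt{|E|}$ with $C_1=c\,w_{\max}\sqrt{2+2d_{\max}}$, which depends only on $\beta_N$, the weight scale, and the degree bound. The same argument applied to $\partial_\beta H$, whose entries are $W_{ij}$ times bounded hyperbolic functions, shows $\partial_\beta f$ moves by $O(\varepsilon)$. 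This justifies the $O(\varepsilon^2)$ remainder, at least where $\lambda_{\min}$ is simple.

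The main obstacle is differentiability of $\lambda_{\min}$ in $\beta$. It is only Lipschitz in general and smooth where the smallest eigenvalue is simple. I would assume simplicity at $\beta_N$, which is implicit in $g$ being defined. By continuity, simplicity then persists on a neighbourhood, and there Hellmann--Feynman gives $g=\boldsymbol\psi_{\min}^\top\partial_\beta H\,\boldsymbol\psi_{\min}$. If this fails, I would fall back on the sign-change argument alone. That argument still gives the $O(\varepsilon\sqrt{|E|}/|g|)$ bound, using one-sided Dini derivatives in place of $g$.

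The final claim $\varepsilon=O(\bar\sigma^2)$ is not proved from scratch. It is inherited from the smooth-similarity discussion preceding the proposition, where the weight perturbation bound is stated in the form $|\widetilde W_{ij}-W_{ij}|\le \varepsilon |W_{ij}|$ with $\varepsilon=O(\bar\sigma^{2})$. At most, I would remark briefly that a second-order expansion of the expected similarity in $\boldsymbol\varepsilon_i$ has vanishing first-order term by zero-mean symmetry. The result then follows by substituting this $\varepsilon$ into the bound above.
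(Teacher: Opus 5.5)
You follow the paper's proof step for step: first-order perturbation of the entries of $H$, a Frobenius bound on $\delta H$, Weyl's inequality, an implicit-function step, and $\varepsilon=O(\bar\sigma^{2})$ taken from the smoothness discussion. You are more careful than the paper (explicit mean-value constants, a sign-change localisation, simplicity of $\lambda_{\min}$ so that $g$ makes sense).

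\textbf{Main gap (shared with the paper).} The gap is in the diagonal estimate. You use $\sum_i d_i^{2}\le 2d_{\max}|E|$, i.e.\ a bounded \emph{maximum} degree, but the proposition assumes only bounded \emph{average} degree. The Weyl route cannot be rescued without it. Join a hub $h$ to a constant fraction of the nodes by edges of weight of order one (the average degree stays bounded), and set $\widetilde W_{hj}=(1+\varepsilon)W_{hj}$. Then $\|\delta H\|_2\ge\delta H_{hh}\asymp\varepsilon|E|$, so no degree-free $C_1$ gives $\|\delta H\|_2\le C_1\varepsilon\sqrt{|E|}$. The paper hides this by calling the diagonal perturbations ``uniformly $O(\varepsilon)$'', although each is a sum of $d_i$ such terms. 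The fix is to make your assumption explicit and let $C_1$ depend on it: bounded maximum degree, or at least $\sum_i d_i^{2}=O(|E|)$. Bounded maximum degree holds automatically for lifts of a fixed QC-LDPC protograph. With bounded $d_{\max}$ you can also drop the Frobenius detour. The row-sum bound gives $\|\delta H\|_2\le 2c\,w_{\max}d_{\max}\varepsilon$ up to $O(\varepsilon^{2})$, independent of $|E|$.

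\textbf{Derivative step.} Your claim that $\partial_\beta f$ moves by $O(\varepsilon)$ ``by the same argument'' is incomplete. Since $\partial_\beta f=\boldsymbol\psi_{\min}^{\top}(\partial_\beta H)\boldsymbol\psi_{\min}$ depends on the eigenvector, you also need a Davis--Kahan bound using the gap above $\lambda_{\min}$; an entrywise bound on $\partial_\beta H$ is not enough. You can avoid perturbing the derivative by expanding the unperturbed curve at the perturbed root. Since $f(\widetilde\beta_N;\widetilde{\mathbf W})=0$, with $r=O(|\widetilde\beta_N-\beta_N|^{2})$,
\begin{align*}
|g|\,|\widetilde\beta_N-\beta_N|
&\le\bigl|f(\widetilde\beta_N;\mathbf W)-f(\widetilde\beta_N;\widetilde{\mathbf W})\bigr|+r\\
&\le\|\delta H(\widetilde\beta_N)\|_2+r.
\end{align*}
This uses only smoothness of $f(\cdot;\mathbf W)$ near $\beta_N$ and your uniform-in-$\beta$ Weyl bound.

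\textbf{Noise rate.} Your zero-mean symmetry remark controls only $\mathbb E\,\widetilde W_{ij}-W_{ij}$, whereas the hypothesis on $|\widetilde W_{ij}-W_{ij}|$ must hold for each realisation. The cross term $2\langle\mathbf p_i-\mathbf p_j,\boldsymbol\varepsilon_i-\boldsymbol\varepsilon_j\rangle$ in the squared distance is of order $\bar\sigma\|\mathbf p_i-\mathbf p_j\|$. Realisation-wise you therefore get only $\varepsilon=O(\bar\sigma)$, with high probability. The paper's one-line justification drops the same cross term. So the $O(\bar\sigma^{2})$ rate is supported only for the mean shift of $\widetilde\beta_N$, not for $|\widetilde\beta_N-\beta_N|$ itself.
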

\begin{IEEEproof}
The proof combines three ingredients.

\textbf{(a) Perturbation of the Bethe--Hessian.}
From Eq.~\eqref{eq:bethe-hessian}, the change in $H$ induced by
$\delta W_{ij}=\widetilde W_{ij}-W_{ij}$ has diagonal and off-diagonal
contributions:
\begin{align*}
\delta H_{ii}
&= \sum_j\bigl[\sinh^{2}(\beta(W_{ij}+\delta W_{ij}))
     -\sinh^{2}(\beta W_{ij})\bigr] \\
&= \beta\sum_j\sinh(2\beta W_{ij})
      \,\delta W_{ij}
   + O(\varepsilon^2), \\[4pt]
\delta H_{ij}
&= -\bigl[\sinh(\beta(W_{ij}+\delta W_{ij}))\cosh(\beta(W_{ij}+\delta W_{ij})) \\
&\qquad\; -\sinh(\beta W_{ij})\cosh(\beta W_{ij})\bigr] \\[2pt]
&= -\beta\cosh(2\beta W_{ij})\,\delta W_{ij}
   + O(\varepsilon^2),\qquad i\neq j.
\end{align*}
Because $|\sinh|$ and $|\cosh|$ are bounded on the support of the weights,
both terms are uniformly $O(\varepsilon)$.  For a sparse graph with
bounded average degree, the number of non-zero entries in $H$ is
$\Theta(|E|)$; consequently $\|\delta H\|_F \le C''\varepsilon\sqrt{|E|}$.

\textbf{(b) Weyl's inequality and eigenvalue shift.}
For symmetric matrices, Weyl's inequality gives
\begin{equation*}
|\lambda_k(\widetilde H)-\lambda_k(H)|
\le \|\delta H\|_2
\le \|\delta H\|_F
\le C''\varepsilon\sqrt{|E|}.
\end{equation*}
At $\beta=\beta_N$ we have $\lambda_{\min}(H)=0$, so the perturbed matrix
satisfies $|\lambda_{\min}(\widetilde H)|\le C''\varepsilon\sqrt{|E|}$.

\textbf{(c) Implicit function theorem.}
The Nishimori condition $\lambda_{\min}(H(\beta))=0$ defines a smooth
level set near $\beta_N$ because the non-degeneracy hypothesis guarantees
$g=\partial_\beta\lambda_{\min}\neq 0$.  By the implicit function theorem,
\begin{equation*}
\widetilde\beta_N-\beta_N
= -\,\frac{\delta\lambda_{\min}}
          {g}\Big|_{\beta=\beta_N}
  + O(\varepsilon^2).
\end{equation*}
The numerator is bounded by part~(b) and the denominator equals $|g|$ by
definition.  Combining these estimates yields Eq.~\eqref{eq:beta-shift-bound}.
Finally, for Gaussian noise satisfying Eq.~\eqref{eq:noise-model}, the
relative perturbation of each squared distance is
$O(\sigma_i^2/\|\mathbf p_i-\mathbf p_j\|^2)$; assuming a bounded
signal-to-noise ratio gives $\varepsilon=O(\bar\sigma^2)$.
\end{IEEEproof}

\begin{remark}
The bound in Eq.~\eqref{eq:beta-shift-bound} confirms that noise acts as
a small perturbation on the critical temperature, not as a systematic
bias.  Physically, this is because at $\beta_N$ the system sits at a phase
transition between paramagnetic and ordered regimes; non-degeneracy of
the smallest-eigenvalue curve protects the singular condition against local
weight fluctuations.
\end{remark}

\section{Proposed Method}
\subsection{Similarity graph construction}
All subsequent steps rest on a similarity graph among objects.  Given raw
vectors $\mathbf x_i\in\mathbb R^D$, we define edge weights via the Gaussian
kernel with adaptive local scales:
\beq{eq:edge-weight}
W_{ij}= \exp\!\Bigl(
  -\frac{\|\mathbf x_i-\mathbf x_j\|^2}{\sigma_i\sigma_j}
\Bigr),
\eeq
where the local scale $\sigma_i$ is the average distance to the $k=10$
nearest neighbours of object~$i$.  We instantiate this kernel on a sparse
quasi-cyclic LDPC backbone (Section~\ref{sec:qc-ldpc}).

\subsection{Noise-Based Spectral Embedding (NBSE)}
\label{sec:nbse}
Given a data matrix $\mathbf X\in\mathbb R^{M\times D}$ (rows = objects,
columns = features), we apply the following pipeline for each
feature~$l$, using the univariate observations $\{x_{il}\}_{i=1}^M$ as
scalar node attributes:
\begin{enumerate}
    \item \textbf{Graph construction.}
          Build a sparse similarity graph $G^{(l)}$ on the $M$ objects
          via Eq.~\eqref{eq:edge-weight} with distances computed in the
          one-dimensional feature space.
    \item \textbf{Nishimori temperature.}
          Solve Eq.~\eqref{eq:nishimori-condition} for $\beta_N^{(l)}$.
          This is a one-dimensional root-finding problem; because
          $f(\beta)\coloneqq\lambda_{\min}(H(\beta))$ is smooth and
          monotonically increasing in the relevant regime, bisection or
          Brent's method converges reliably.  The smallest eigenvalue is
          obtained with a few Lanczos iterations exploiting sparsity.
    \item \textbf{Bethe--Hessian eigenvector.}
          Form $H(\beta_N^{(l)})$ via Eq.~\eqref{eq:bethe-hessian} and
          extract its smallest eigenvector $\boldsymbol\psi_{\min}^{(l)}$.
          By Proposition~1, the degree-dependent diagonal correctly
          normalises heterogeneous degrees.
    \item \textbf{One-dimensional embedding.}
          The vector $\boldsymbol\psi_{\min}^{(l)}\in\mathbb R^{M}$ is a
          diffusion-based coordinate for feature~$l$, indicating how that
          feature participates in the slowest degree-corrected flow on its
          graph.
\end{enumerate}
Collecting all coordinates yields the \emph{spectral fingerprint matrix}
\beq{eq:fingerprint-matrix}
\Psi = \bigl[\,\boldsymbol\psi_{\min}^{(1)}\;\boldsymbol\psi_{\min}^{(2)}\;
              \cdots\;\boldsymbol\psi_{\min}^{(D)}\,\bigr]
\in\mathbb R^{M\times D}.
\eeq
Each column of $\Psi$ encodes the diffusion behaviour of one original
feature.  Features that induce similar univariate graph topologies---and
hence similar eigenvectors---are likely to be redundant or semantically
related.

Under the \emph{quasi-stationary hypothesis} we may compute a single
global value $\beta_N$ on an aggregate object graph and reuse it for all
univariate Bethe--Hessians, reducing complexity from $O(D\cdot M\log M)$
to $O(M\log M)$.  Proposition~\ref{prop:noise-stability} justifies this
simplification: sufficiently small inter-feature perturbations shift
$\beta_N$ by at most $O(\varepsilon/|g|)$.

\subsection{Spectral diffusion-guided feature ablation}
\label{sec:spectral-ablation}
Directly clustering the columns of $\Psi$ would require pairwise
comparisons among $D$ vectors.  We avoid this cost by embedding features
themselves: transpose the data matrix to treat features as nodes in a
$D$-node graph and apply NBSE to the resulting feature--feature affinity.

The eigenvector obtained from this transposed diffusion, denoted
$\boldsymbol\phi_{\min}\in\mathbb R^{D}$, orders features according to the
similarity of their roles in the joint diffusion geometry.
Algorithm~\ref{alg:spectral-ablation} implements balanced histogram
binning on $\boldsymbol\phi_{\min}$: its range is divided into $n$ equal
intervals, and one representative per interval is selected.  The resulting
index set $\mathcal I\subset\{1,\dots,D\}$, with $|\mathcal I|=n$,
preserves the global diffusion structure while discarding redundant
dimensions.

The topology of the QC-LDPC graph strongly influences the histogram of
feature embeddings (Fig.~\ref{fig:Feature_frequency_to_frequency_interval}).
A unique Nishimori temperature together with a low-energy eigenvector
produces well-separated clusters in the transposed feature space,
allowing binning ablation to select one representative from each cluster.
By contrast, Erd\H{o}s--R\'enyi graphs do not satisfy the spectral
requirements of diffusion-guided ablation; their spectra fail to reveal
informative feature groups~\cite{Usatyuk2024,Usatyuk2025}.  The same
limitation holds for non-constrained expanders, fully-connected graphs,
and line graphs.

\begin{figure*}[t]
    \centering
    \includegraphics[width=0.48\linewidth]{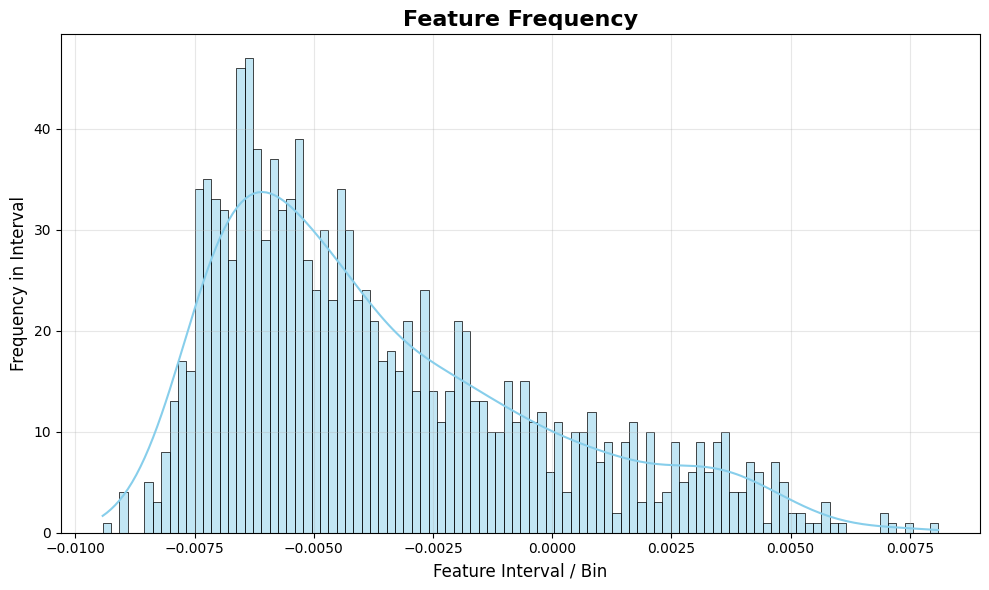}
    \hfill
    \includegraphics[width=0.48\linewidth]{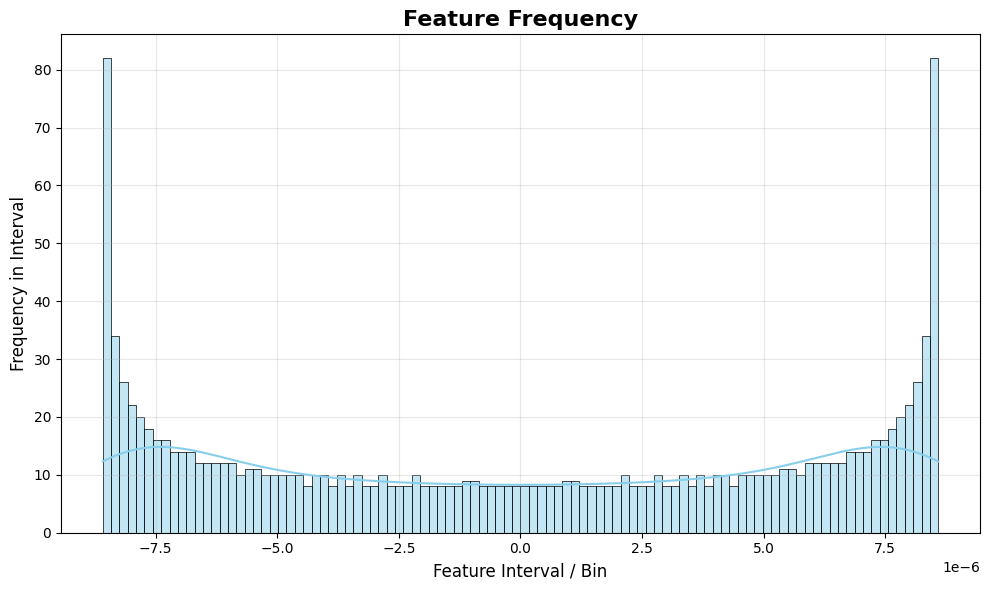}
    \caption{(Left) Bad distribution of features: random placement yields
             no discernible spectral structure.
             (Right) Good multimodal distribution: compact, well-separated
             clusters reflect semantic or statistical similarity.}
    \label{fig:Feature_frequency_to_frequency_interval}
\end{figure*}

\begin{algorithm}[!t]
\caption{Spectral Diffusion-Guided Ablation of Features}%
\label{alg:spectral-ablation}
\begin{algorithmic}[1]
\Require $\mathbf X\in\mathbb R^{M\times D}$, target number $n$ of
         retained features, QC-LDPC graph parameters.
\Ensure Representative index set $\mathcal I$, $|\mathcal I|=n$.
\State // Stage 1: Feature-level spectral embedding
\State $\mathbf Z \gets \mathbf X^{\top}$
       \Comment{Treat features as $D$ objects in $\mathbb R^M$}
\State Build feature affinity matrix $\mathbf C\in\mathbb R^{D\times D}$
       from $\mathbf Z$ via Eq.~\eqref{eq:edge-weight}.
\State Construct $\tilde{\mathbf D}(\beta)$ and $\mathbf S(\beta)$ from
       $\mathbf C$ using Eq.~\eqref{eq:bethe-hessian}.
\State $\beta_N \gets$ root of
       $\lambda_{\min}\bigl(\mathbf I+\tilde{\mathbf D}(\beta)-\mathbf S(\beta)\bigr)=0$
\State $H \gets \mathbf I+\tilde{\mathbf D}(\beta_N)-\mathbf S(\beta_N)$
\State $\boldsymbol\phi_{\min} \gets$ eigenvector for smallest
       eigenvalue of $H$
       \Comment{$\boldsymbol\phi_{\min}\in\mathbb R^{D}$}
\If{$n\le0$}
    \Return $\emptyset$
\ElsIf{$n\ge D$}
    \Return $\{1,\dots,D\}$
\EndIf
\State // Stage 2: Balanced histogram binning
\State $a_{\min}\gets\min_i (\boldsymbol\phi_{\min})_i$,
       $\; a_{\max}\gets\max_i (\boldsymbol\phi_{\min})_i$
\If{$a_{\min}=a_{\max}$}
    \Return first $n$ indices
\EndIf
\State Partition $[a_{\min},\,a_{\max}]$ into $n$ equal intervals
       $\mathcal B_0,\dots,\mathcal B_{n-1}$.
\For{$k=0$ \textbf{to} $n-1$}
    \State $\mathcal C_k \gets \{ i : (\boldsymbol\phi_{\min})_i\in\mathcal B_k \}$
    \State $q_k \gets \bigl\lfloor n\,|\mathcal C_k|/D \bigr\rceil$
           \Comment{Proportional quota, rounded}
\EndFor
\State Adjust quotas $\{q_k\}$ so that $\sum_k q_k = n$
       (add/subtract remainder to/from largest bins).
\State $\mathcal I \gets \emptyset$, \quad $\mathcal U \gets \emptyset$
       \Comment{Used-feature tracker}
\For{$k=0$ \textbf{to} $n-1$}
    \If{$q_k = 0$}\; \textbf{continue} \EndIf
    \State $\mathcal C_k' \gets \mathcal C_k\setminus\mathcal U$
    \If{$\mathcal C_k'=\emptyset$}
        \Comment{Fallback: nearest unused feature to bin centre $m_k$}
        \State $j^\star \gets \arg\min_{j\notin\mathcal U}\,
               |(\boldsymbol\phi_{\min})_j - m_k|$
    \Else
        \State Sort $\mathcal C_k'$ by $(\boldsymbol\phi_{\min})_i$ ascending.
        \State Select $q_k$ indices uniformly from the sorted list
               (including end points).
    \EndIf
    \State Add selected index(indices) to $\mathcal I$ and mark as used.
\EndFor
\State \Return $\mathcal I$
\end{algorithmic}
\end{algorithm}

\subsection{Classification pipeline}
After obtaining the index set $\mathcal I$, we retain the corresponding
columns,
$\mathbf X_{\text{red}} = \mathbf X_{[:,\mathcal I]}$,
and train a linear classifier (logistic regression or linear SVM).  At
test time, raw features are computed, unselected dimensions are discarded,
and the reduced vector is fed to the trained model.  No retraining of the
diffusion operator is required.

The complete inference pipeline consists of five stages:
\begin{enumerate}
    \item Compute object-level NBSE fingerprints $\Psi$
          (Eq.~\eqref{eq:fingerprint-matrix}).
    \item Transpose $\mathbf X$, build a QC-LDPC feature graph, and compute
          its smallest Bethe--Hessian eigenvector
          $\boldsymbol\phi_{\min}$.
    \item Apply Algorithm~\ref{alg:spectral-ablation} to obtain the index
          set $\mathcal I$.
    \item Train a linear classifier on the reduced matrix
          $\mathbf X_{[:,\mathcal I]}$.
    \item At test time, extract features, select columns $\mathcal I$, and
          classify.
\end{enumerate}

\subsection{QC-LDPC graph construction}
\label{sec:qc-ldpc}
The quality of NBSE depends critically on graph topology.  We construct a
\emph{multi-edge type quasi-cyclic low-density parity-check} (MET QC-LDPC)
similarity graph~\cite{METLDPC} with the following properties:
\begin{itemize}
    \item \textbf{Sparsity:} Average degree $d_{\text{avg}}\approx 18$,
          ensuring $O(M)$ edges and fast sparse eigenvalue solvers.
    \item \textbf{Quasi-cyclic structure:} The adjacency matrix is a block
          circulant lift of a small protograph, which guarantees regular
          spectral properties and enables efficient construction via the
          method in~\cite{Usatyuk2025}.
    \item \textbf{Large girth:} Short cycles are avoided (girth $\ge 6$),
          making the graph locally tree-like.  This is essential because
          the Bethe approximation underlying Eq.~\eqref{eq:bethe-hessian}
          is asymptotically exact on trees.
    \item \textbf{Multi-edge type:} Different edge types encode different
          similarity scales, improving the resolution of $\beta_N$
          estimation.
\end{itemize}
These constraints ensure that the graph ensemble has a well-defined degree
distribution and concentrates its spectral properties.  In practice,
$\beta_N$ is stable across random lifts of the same protograph,
contributing to the experimental robustness reported in Section~V.

\section{Experimental Evaluation}
\subsection{Setup}
We used two deep-network encoders as feature generators:
\begin{itemize}
    \item \textbf{MobileNetV2}~\cite{Sandler2018}: $1280$-dimensional
          penultimate layer.
    \item \textbf{EfficientNet-B4}~\cite{Tan2019}: $1792$ dimensions.
\end{itemize}
Both models were pre-trained on ImageNet and evaluated on the ImageNet-1K
validation set ($1000$ classes, $50\,000$ images).  To probe robustness
under varying representation quality, we created two regimes:
\begin{itemize}
    \item \emph{Best representation:} graph with $35\,000$ nodes;
          training classifier on $30\,000$ frozen embeddings.
    \item \emph{Worst representation:} graph with $25\,000$ nodes;
          training classifier on $20\,000$ frozen embeddings.
\end{itemize}

\subsection{Graph construction}
For every dataset we built one quasi-cyclic LDPC similarity graph:
\begin{itemize}
    \item Average degree $d_{\text{avg}}\approx 18$,
    \item Edge weight
          $W_{ij}= \exp(-\|\mathbf x_i-\mathbf x_j\|^2/(\sigma_i\sigma_j))$
          (Eq.~\eqref{eq:edge-weight}),
    \item Local scale $\sigma_i$ computed as the mean distance to the
          $k=10$ nearest neighbours.
\end{itemize}
The graph satisfies the numerical constraints of an $(m,L)$ MET QC-LDPC
code~\cite{METLDPC}, guaranteeing fast eigenvalue computation using the
method in~\cite{Usatyuk2025}.

\subsection{Baselines}
We compared three feature-reduction strategies:
\begin{enumerate}
    \item \textbf{NBSE topology-informed diffusion-guided spectral ablation}
          (Algorithm~\ref{alg:spectral-ablation});
    \item \textbf{ANOVA $F$-test}---select the top-$n$ features with highest
          $F$ scores~\cite{Guyon2003}.  Implemented via scikit-learn
          \cite{Anova};
    \item \textbf{Random choice}---uniformly random selection of $n$
          features.
\end{enumerate}
For each method we varied the retained feature proportion
$p\in\{1.0,0.9,\dots,0.3\}$ and measured classification accuracy of a
logistic-regression model (Crammer--Singer multi-class formulation).  Each
experiment was repeated five times with different random seeds; mean
$\pm$ std is reported.

\subsection{Results}
We evaluated two variants for estimating $\beta_N$:
\begin{enumerate}
    \item \textbf{Global quasi-stationary}---compute a single $\beta_N$
          on the full object graph and reuse it for all features (left
          panels in Figs.~\ref{fig:accuracy-curves1},
          \ref{fig:accuracy-curves2});
    \item \textbf{Independent per-feature}---compute a separate
          $\beta_N^{(l)}$ for each univariate feature slice (right panels).
\end{enumerate}
The global variant simplifies computation and yields virtually identical
classification performance, confirming that the Nishimori temperature is
stable across feature slices under the quasi-stationarity hypothesis.
This empirical observation is consistent with
Proposition~\ref{prop:noise-stability}: if inter-feature perturbations are
small relative to cluster separation, $\beta_N$ shifts negligibly.

Figs.~\ref{fig:accuracy-curves1} (MobileNetV2) and
\ref{fig:accuracy-curves2} (EfficientNet-B4) show classification accuracy
versus retained feature proportion.  The main observations are:

\textbf{MobileNetV2.}
The spectral selection method on the best representation exhibits high
stability under feature reduction, with accuracy fluctuations no greater
than $5\%$; for the worst representation the fluctuation is below $3\%$.
On the spectral embedding, ANOVA and random selection perform more
reliably than on original data.  Random selection unexpectedly outperforms
ANOVA, likely because ANOVA's greedy nature selects overly correlated
features.  Working directly with original data leads to severe degradation:
the model loses up to $17\%$ accuracy with random selection and up to
$25\%$ with ANOVA.

\textbf{EfficientNet-B4.}
With this more powerful architecture, all methods gain stability.
The best result is the combination of spectral embedding and NBSE selector
on a good representation: accuracy loss is less than $1\%$ even at $70\%$
feature reduction ($30\%$ retention).  This outperforms random selection
on spectral embedding by $0.5\%$, ANOVA on spectral embedding by $2.7\%$,
and both original-data methods by a significant $6.8\%$.
These results confirm that the synergy of spectral data representation and
specialised selection preserves information content during aggressive
compression, especially with efficient neural architectures.

\begin{figure*}[t]
    \centering
    \includegraphics[width=0.48\linewidth]{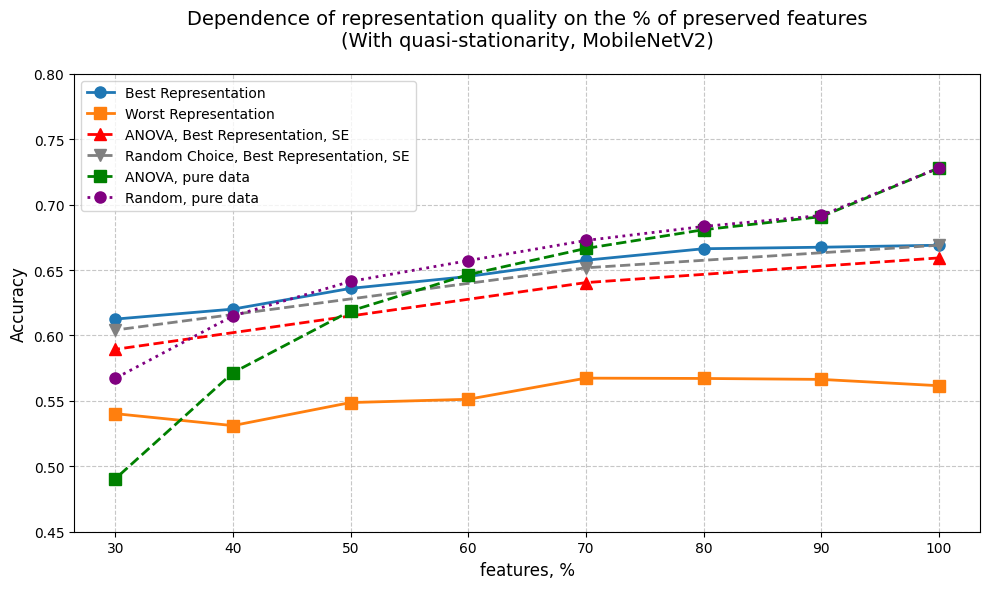}
    \hfill
    \includegraphics[width=0.48\linewidth]{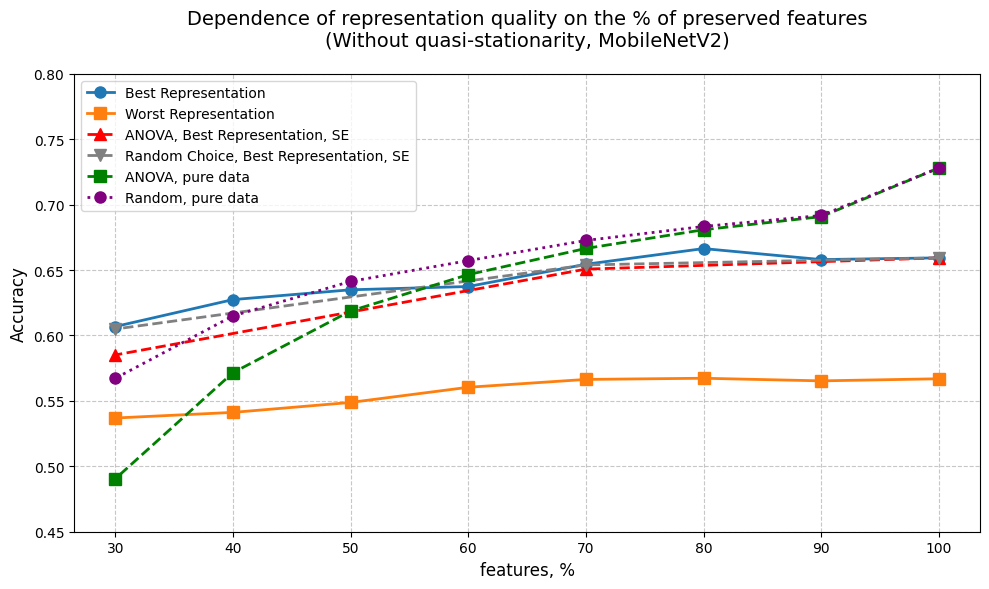}
    \caption{MobileNetV2: (left) quasi-stationary case, (right)
             non-quasi-stationary case---representation quality versus the
             percentage of preserved features.
             Solid blue line: NBSE spectral feature selector, best
             representation.  Solid orange line: NBSE spectral feature
             selector, worst representation.  Dashed red line:
             NBSE + ANOVA under Spectral Embedding (SE), best
             representation.  Dashed gray line: NBSE + random selection,
             best representation.  Dashed green line: original data +
             ANOVA.  Dotted purple line: original data + random selection.}
    \label{fig:accuracy-curves1}
\end{figure*}
\begin{figure*}[t]
    \centering
    \includegraphics[width=0.48\linewidth]{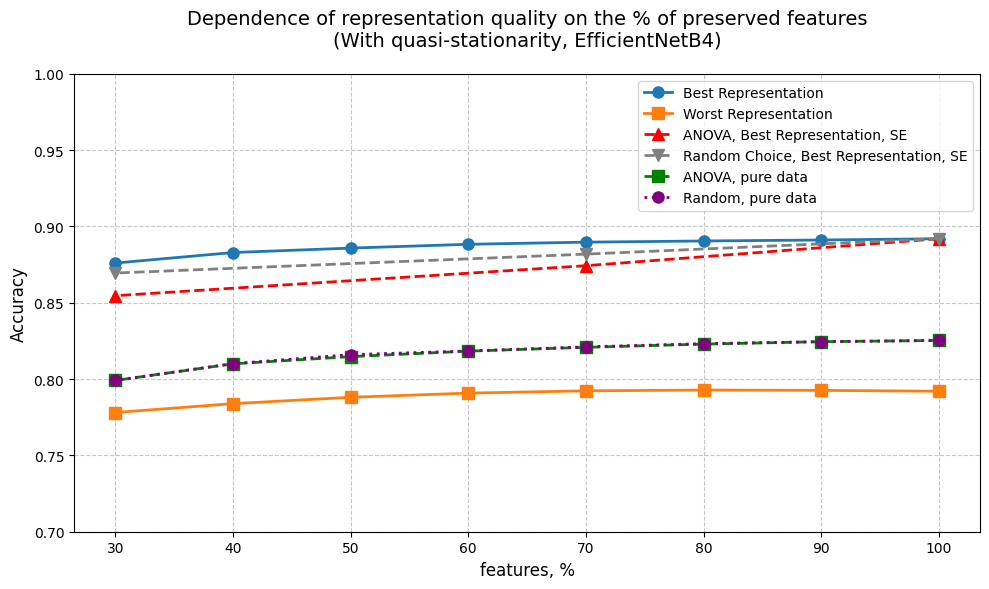}
    \hfill
    \includegraphics[width=0.48\linewidth]{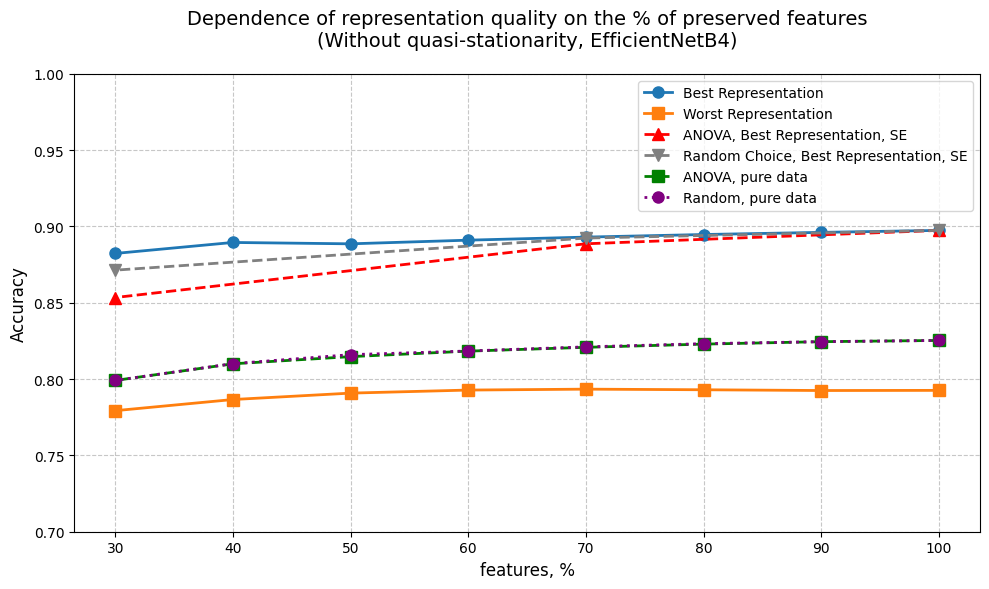}
    \caption{EfficientNet-B4: (left) quasi-stationary case---single
             $\beta_N$ computed on the full object graph and reused for all
             features; (right) non-quasi-stationary case---separate
             $\beta_N^{(l)}$ per feature slice.  The near-identical curves
             validate the quasi-stationarity hypothesis, justifying the
             simplified global computation for production use.  Line styles as
             in Fig.~\ref{fig:accuracy-curves1}.}
    \label{fig:accuracy-curves2}
\end{figure*}

\section{Discussion}
The experimental results provide compelling evidence that degree-corrected,
diffusion-based feature selection offers distinct advantages over classical
statistical filters for high-dimensional deep-learning embeddings.

\paragraph*{Global correlation structure.}
NBSE captures global correlations that univariate methods inherently miss.
Classical filter methods such as the ANOVA $F$-test operate under an
implicit independence assumption, scoring each dimension solely on its own
variance relative to class labels.  Deep-network embeddings are highly
redundant; multiple neurons often encode semantically similar information
because of over-parameterisation.  By constructing a similarity graph and
analysing the Bethe--Hessian spectrum, NBSE models the manifold structure
of the data directly.  The Nishimori identities guarantee that at $\beta_N$
thermal averages satisfy exact gauge symmetries, while the degree-corrected
diagonal $\tilde D_{ii}(\beta)$ ensures that selected features are not merely
those with high marginal variance but those contributing most significantly
to the global diffusion process.  Consequently, the intrinsic geometry of
class clusters is preserved even when $70\%$ of dimensions are discarded.

\paragraph*{Stability and representation quality.}
A striking finding is the disparity in stability between spectral embedding
space and original data space.  For MobileNetV2, aggressive ANOVA reduction
on raw features caused accuracy drops up to $25\%$, whereas NBSE maintained
fluctuations within a narrow $5\%$ band for good representations.  The
diffusion process acts as a regulariser, smoothing out high-frequency noise
that typically destabilises greedy selection algorithms.  Even with a
suboptimal data slice, the spectral method exhibited only $3\%$
fluctuation, indicating that the embedding itself provides a robust
substrate less sensitive to sampling variance than raw feature vectors.

\paragraph*{The ANOVA paradox in spectral space.}
Random selection outperformed ANOVA within the spectral embedding.  This
counter-intuitive result arises because diffusion embeddings transform
features into a coordinate system where dimensions are highly correlated by
design (reflecting shared diffusion modes).  The greedy nature of ANOVA
selects top-scoring features independently, thereby picking multiple members
of the same diffusion cluster and yielding diminishing returns.  Random
selection inadvertently samples across distinct diffusion modes.  NBSE's
spectral ablation explicitly bins the eigenvector $\boldsymbol\phi_{\min}$
and selects representatives from disjoint intervals, ensuring diversity.

\paragraph*{Architectural implications.}
The performance gap between MobileNetV2 and EfficientNet-B4 underscores how
feature selection interacts with backbone architecture.  EfficientNet-B4's
superior stability (less than $1\%$ loss at $70\%$ reduction) suggests that
deeper, more efficient networks produce denser, semantically richer
embeddings better suited for spectral analysis.  As backbone networks
improve, the efficacy of physics-guided feature selection increases.
Moreover, validation of the quasi-stationary hypothesis demonstrates
computational viability: computing a single $\beta_N$ reduces complexity
from $\mathcal O(D\cdot M\log M)$ to $\mathcal O(M\log M)$, which is
feasible for real-time applications where per-feature optimisation would be
prohibitive.

\paragraph*{Limitations and future directions.}
NBSE relies on constructing a sparse similarity graph; in extremely
high-noise regimes where manifold structure is obscured,
Proposition~\ref{prop:noise-stability} warns that $\beta_N$ may shift
appreciably.  Additionally, the current implementation focuses on binary
spectral clustering via the smallest eigenvector; extending to multi-class
Nishimori formulations could enhance performance on datasets with complex
class boundaries.  Future work will explore adaptive graph construction in
which both protograph topology and edge weights are learned directly from
data.

In summary, NBSE is a principled approach grounded in statistical physics
that aligns feature selection with the intrinsic degree-corrected diffusion
properties of the data manifold.

\section{Conclusion}
We introduced Noise-Based Spectral Embedding (NBSE), a diffusion-guided
feature-selection technique grounded in statistical physics.  By exploiting
the Nishimori temperature as an analytically determined optimal diffusion
parameter and using the full Bethe--Hessian---including its
cavity-susceptibility diagonal
$\tilde D_{ii}(\beta)=\sum_j\sinh^2(\beta W_{ij})$ and effective off-diagonal
couplings $S_{ij}(\beta)=\sinh(\beta W_{ij})\cosh(\beta W_{ij})$---to
capture the slowest heat-kernel mode, NBSE provides (a) a principled
one-dimensional spectral coordinate for each object; (b) a spectral ablation
procedure that groups redundant features without greedy heuristics;
(c) robust performance under realistic noise and aggressive dimensionality
reduction.

Proposition~\ref{prop:noise-stability} rigorously bounds the shift in the
Nishimori temperature caused by coloured Gaussian perturbations, confirming
that the method remains stable for noise levels typical of deep embeddings.
The corrected diffusion interpretation via Eq.~\eqref{eq:bh-laplacian}
shows that the degree-corrected Bethe--Hessian avoids the spectral bias of
standard Laplacians on sparse heterogeneous graphs---the regime where
classical spectral methods fail.

Experimental results confirm that the synergy of spectral data
representation and specialised selection preserves information during
significant compression.  On EfficientNet-B4, accuracy loss was limited to
less than $1\%$ even at $70\%$ feature reduction, outperforming baselines by
up to $6.8\%$.

Future work will scale NBSE to larger datasets (e.g.\ ImageNet-21K),
develop adaptive graph-construction procedures that learn both protograph
topology and edge weights from data, investigate ensembles of graph
representations, and extend the framework to multi-class Nishimori
formulations with non-binary label priors leveraging non-binary QC-LDPC
codes.



\begin{thebibliography}{99}
\bibitem{Jolliffe}
I.~T. Jolliffe,
\emph{Principal Component Analysis}, 2nd ed.
New York, NY, USA: Springer, 2002.

\bibitem{Maaten2008}
L.~van der Maaten and G.~Hinton,
``Visualizing data using t-SNE,''
\emph{J. Mach. Learn. Res.}, vol.~9, pp.~2579--2605, 2008.

\bibitem{McInnes2018}
L. McInnes and J. Healy,
``UMAP: Uniform manifold approximation and projection for dimension
reduction,''
\emph{arXiv:1802.03426}, 2018.

\bibitem{Guyon2003}
I.~Guyon and A.~Elisseeff,
``An introduction to variable and feature selection,''
\emph{J. Mach. Learn. Res.}, vol.~3, pp.~1157--1182, 2003.

\bibitem{Fisher1936}
R.~A. Fisher,
``The use of multiple measurements in taxonomic problems,''
\emph{Annals Eugenics}, vol.~7, no.~2, pp.~179--188, 1936.

\bibitem{Kohavi1997}
R.~Kohavi and G.~John,
``Wrappers for feature subset selection,''
\emph{Artif. Intell.}, vol.~97, nos.~1--2, pp.~273--324, 1997.

\bibitem{Zou2005}
H.~Zou and T.~Hastie,
``Regularization and variable selection via the elastic net,''
\emph{J. R. Stat. Soc. B}, vol.~67, no.~2, pp.~301--320, 2005.

\bibitem{Ng2002}
A.~Y. Ng, M.~Jordan, and Y.~Weiss,
``On spectral clustering: Analysis and an algorithm,''
in \emph{Advances in Neural Information Processing Systems},
vol.~14, 2002, pp.~849--856.

\bibitem{Belkin2003}
M.~Belkin and P.~Niyogi,
``Laplacian eigenmaps for dimensionality reduction and data
representation,''
\emph{Neural Comput.}, vol.~15, no.~6, pp.~1373--1396, 2003.

\bibitem{Zhu2003}
X.~Zhu, Z.~Ghahramani, and J.~Lafferty,
``Semi-supervised learning using Gaussian fields and harmonic
functions,''
in \emph{Proc. Int. Conf. Machine Learning}, 2003, pp.~912--919.

\bibitem{DallAmico2021}
L. Dall'Amico, R.~Couillet, and N.~Tremblay,
``A unified framework for spectral clustering in sparse graphs,''
\emph{J. Mach. Learn. Res.}, vol.~22, no.~217, pp.~1--56, 2021.

\bibitem{Usatyuk2024}
V.~S. Usatyuk, D.~A. Sapozhnikov, and S.~I. Egorov,
``Enhanced image clustering with random-bond Ising models using LDPC
graph representations and Nishimori temperature,''
\emph{Moscow Univ. Phys. Bull.}, vol.~79, suppl.~2,
pp.~S647--S665, 2024.

\bibitem{Usatyuk2025}
V. S. Usatyuk, D. A. Sapozhnikov, and S. I. Egorov,
``Natural image classification via quasi-cyclic graph ensembles and
random-bond Ising models at the Nishimori temperature,''
\emph{Moscow Univ. Phys. Bull.}, vol.~80, suppl.~3,
pp.~S1039--S1053, 2025.

\bibitem{Nishimori2001}
H.~Nishimori,
\emph{Statistical Physics of Spin Glasses and Information Processing:
An Introduction}.
Oxford, U.K.: Oxford Univ. Press, 2001.

\bibitem{Saade2014}
A. Saade, F. Krzakala, and L. Zdeborov\'a,
``Spectral clustering of graphs with the Bethe Hessian,''
in \emph{Advances in Neural Information Processing Systems},
vol.~27, 2014, pp.~406--414.

\bibitem{METLDPC}
T. J. Richardson and R. L. Urbanke,
``Multi-edge type LDPC codes,''
presented at the Workshop honoring Prof. Bob McEliece, Pasadena, CA, USA,
2002.

\bibitem{Anova}
F. Pedregosa et al.,
``Scikit-learn: Machine learning in Python,''
\emph{J. Mach. Learn. Res.}, vol.~12, pp.~2825--2830, 2011.

\bibitem{Sandler2018}
M.~Sandler, A.~Howard, M.~Zhu, A.~Zhmoginov, and L.-C. Chen,
``MobileNetV2: Inverted residuals and linear bottlenecks,''
in \emph{Proc. IEEE Conf. Computer Vision and Pattern Recognition},
2018, pp.~4510--4520.

\bibitem{Tan2019}
M.~Tan and Q.~Le,
``EfficientNet: Rethinking model scaling for convolutional neural
networks,''
in \emph{Proc. Int. Conf. Machine Learning}, 2019, pp.~6105--6114.
\end{thebibliography}
\end{document}